\documentclass{article}

\usepackage[dblblindworkshop, final]{neurips_2025}
\usepackage[utf8]{inputenc}
\usepackage[T1]{fontenc}
\usepackage{microtype}
\usepackage{xcolor}
\usepackage{graphicx}
\usepackage{booktabs}
\usepackage{amsmath,amssymb,mathtools,amsthm}
\usepackage{nicefrac}
\usepackage{url}
\usepackage{array,tabularx}
\usepackage{listings}
\usepackage{lstautogobble}
\usepackage{algorithm}
\usepackage[noend]{algpseudocode}
\usepackage{enumitem}
\usepackage{float} 
\usepackage{hyperref}
\usepackage{subcaption}
\lstdefinestyle{py}{language=Python}
\workshoptitle{NeurIPS 2025 Workshop on Efficient Reasoning}
\title{Efficient Reasoning at Fixed Test-Time Cost via Length Aware Attention Priors and Gain Aware Training}

\author{
  \textbf{Rian Atri}\\
  Serval Systems\\
  \texttt{ratri@ieee.org}
}

\newcolumntype{Y}{>{\raggedright\arraybackslash}X}
\setcitestyle{numbers,square,comma,sort&compress}
\newcommand{\rcite}[1]{\citep{#1}}

\newtheorem{theorem}{Theorem}
\newtheorem{lemma}{Lemma}
\newtheorem{assumption}{Assumption}
\newtheorem{remark}{Remark}
\newtheorem{proposition}{Proposition}

\begin{document}
\maketitle

\begin{abstract}
We study \emph{efficient reasoning} under tight compute: how to make structured, correct decisions without increasing test-time cost.
We add two \emph{training-time only} components to small/medium Transformers that also transfer to broader differentiable optimizers.
First, a \textbf{length-aware attention prior} built via fuzzy \emph{regime-position alignment} (RPA) yields a normalized pre-softmax bias that guides attention like a structured regularizer while adding no new inference parameters.
Second, a minimal \textbf{gain-aware controller} (Guardian) nudges attention sharpness only when validation improvements warrant it, following a two-timescale policy-gradient view of nonconvex optimization;
it is disabled at inference. A KL perspective shows $\operatorname{softmax}(z+\log \pi)$ as MAP with KL regularization, grounding the prior in a principled objective.
Under strict compute parity on WikiText-2, we reduce validation cross-entropy while matching baseline latency and memory.
At inference, we add a precomputed, cached prior $B(T)$ as a single \emph{additive} bias per head;
the controller does not run. In practice, this incurs negligible overhead (a cached bias add per head; no measurable p50 latency shift).
Our results suggest that length-aware priors and late-phase gain control preserve scarce improvements,especially in long-span, noisy-logit regimes,while keeping test-time costs effectively unchanged.
\end{abstract}

\section{Introduction}
Reasoning systems are ultimately \emph{optimizers}: they allocate probability mass, select actions, and propagate constraints under memory and time budgets.
At small/medium scale, training often plateaus late: as the learning rate decays and averages dominate, short bursts of genuine progress get washed out.
In parallel, inductive bias on \emph{where} to attend or route is frequently either rigid (fixed sinusoids) or ad-hoc (relative/rotary heuristics), which can misalign with structure the model is in fact discovering.
We frame \textbf{efficient reasoning over general optimization} as preserving scarce, high-value improvements without increasing test-time cost.
Concretely, we couple two levers:
\begin{enumerate}[leftmargin=*]
  \item \textbf{Regime--Position Alignment (RPA):} fuzzy token-to-regime memberships aligned to a length-aware positional basis via Sinkhorn, yielding a \emph{data-driven, zero-parameter} pre-softmax attention prior.
\item \textbf{Gain-aware control (Guardian):} a tiny controller that adjusts a bounded attention temperature only when validation gains justify it;
otherwise it relaxes. The controller is \emph{training-only} and disabled at inference.
\item \textbf{Tail-optimized schedules:} nonzero LR floors and selective SWA that preserve late-phase gains under fixed compute.
\end{enumerate}

Our perspective is modular and optimization-centric. The RPA prior acts as a \emph{structured regularizer} on attention allocations, derived from a KL-regularized MAP view;
the controller is a \emph{projected, slow-timescale} policy-gradient update to a scalar hyperparameter.
Together they protect marginal improvements in regimes where content logits are noisy and reasoning requires long-span links, all while keeping inference latency and memory unchanged.
\paragraph{Contributions} (i) A principled KL view connecting pre-softmax priors to MAP with KL regularization, explaining when and why a prior steers attention.
(ii) A concrete, \emph{length-aware} RPA construction from fuzzy memberships and soft positional blocks aligned by entropic transport.
(iii) A minimal, gain-aware controller for late-phase optimization, disabled at inference.
(iv) Compute-parity experiments on WT2 and diagnostics for long-context linkage, plus full code listings for reproduction.
\paragraph{Scope} Our goal is \emph{efficient} reasoning: stronger structure and stability for the same test-time budget.
We do not claim SOTA or zero runtime overhead; the prior adds a fixed bias at test time and the controller does not run.
\section{Attention with a Prior is KL-Regularized MAP}
\label{sec:klmap}
Let $z\in\mathbb{R}^L$ be pre-softmax content logits for a query row and $\pi\in\Delta^{L-1}$ a strictly positive prior over keys.
\begin{theorem}[KL-regularized MAP]\label{thm:klmap}
Define $a^\pi(z)=\mathrm{softmax}(z+\log\pi)$. Then
\[
 a^\pi(z)=\arg\max_{a\in\Delta^{L-1}}\; a^\top z\ -\ \mathrm{KL}(a\|\pi),
\]
with a unique maximizer.
\end{theorem}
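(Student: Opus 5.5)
The plan is to rewrite the objective as a single KL divergence to the candidate softmax, plus a constant. This identity (the Gibbs variational principle) gives existence, identification, and uniqueness of the maximizer in one step, without Lagrange multipliers.

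Set $q=a^\pi(z)$, so $q_i=\pi_i e^{z_i}/Z$ with $Z=\sum_j \pi_j e^{z_j}>0$; this is well defined because $\pi$ is strictly positive. Then $q$ is strictly positive and lies in $\Delta^{L-1}$. Taking logarithms gives $\log q_i = z_i+\log\pi_i-\log Z$, that is, $z_i=\log q_i-\log\pi_i+\log Z$.

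For any $a\in\Delta^{L-1}$, use the convention $0\log 0=0$, so that $\mathrm{KL}(a\|\pi)=\sum_i a_i\log(a_i/\pi_i)$ is finite. Substituting the expression for $z_i$ and using $\sum_i a_i=1$ gives
\[
a^\top z-\mathrm{KL}(a\|\pi)=\sum_i a_i\log\frac{q_i}{\pi_i}+\log Z-\sum_i a_i\log\frac{a_i}{\pi_i}=\log Z-\mathrm{KL}(a\|q).
\]
Coordinates with $a_i=0$ contribute nothing to any of the sums, so the boundary of the simplex needs no separate treatment. This bookkeeping is the only point requiring care.

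The conclusion then follows from Gibbs' inequality: $\mathrm{KL}(a\|q)\ge 0$, with equality if and only if $a=q$. This holds by strict concavity of $\log$ and Jensen's inequality, applied on the support of $a$, where $q>0$. Hence the objective is bounded above by $\log Z$. This bound is attained exactly at $a=q=a^\pi(z)$, which gives both the argmax identity and uniqueness. As a by-product, the optimal value is the log-partition function $\log\sum_j\pi_j e^{z_j}$.

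A fallback route is also available. Strict concavity of $a\mapsto a^\top z-\mathrm{KL}(a\|\pi)$ on the compact simplex gives existence and uniqueness. The KKT conditions give the stationary point $\log(a_i/\pi_i)+1-z_i+\lambda=0$, and the infinite derivative of $a\log a$ at $0$ rules out boundary optima. The KL identity above avoids that boundary argument, so I would present it as the main proof.
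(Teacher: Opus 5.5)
Your proof is correct, and it takes a different route from the paper. The paper's sketch forms the Lagrangian with only the equality constraint $\sum_j a_j=1$. It reads $a_j\propto\pi_j e^{z_j}$ off the stationarity condition and cites strict concavity for uniqueness. That is essentially your fallback route, except that the paper never addresses the nonnegativity constraints, existence, or why the maximizer is interior.

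Your main argument instead verifies the guessed answer through the exact identity $a^\top z-\mathrm{KL}(a\|\pi)=\log Z-\mathrm{KL}(a\|q)$ and Gibbs' inequality. This gives existence, identification, uniqueness and the boundary case in one step, because zero coordinates of $a$ contribute nothing and $q>0$ everywhere. It also gives two things the paper's argument does not: the optimal value $\log\sum_j\pi_j e^{z_j}$, and the exact suboptimality gap $\mathrm{KL}(a\|q)$ for every feasible $a$.

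The Lagrangian route has its own advantage: it derives the form of the maximizer rather than requiring it in advance. It also extends more readily to other regularizers or additional linear constraints, where no closed-form completion identity is available. To be rigorous, though, it needs exactly the supplementary points you list: compactness for existence, and the infinite slope of $a\log a$ at $0$ to exclude boundary maximizers. The paper's sketch leaves both implicit.
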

\noindent\textit{Sketch.} Lagrangian $L(a,\lambda)=a^\top z-\sum_j a_j\log\tfrac{a_j}{\pi_j}+\lambda(1-\sum_j a_j)$.
KKT $\Rightarrow \log\tfrac{a_j}{\pi_j}=z_j-\lambda \Rightarrow a_j\propto \pi_j e^{z_j}$. Strict concavity gives uniqueness.
\paragraph{Implications} (1) The prior acts as a \emph{directional regularizer} on row-wise attention distributions, biasing entropy toward $\mathrm{H}(\pi)$.
(2) Standardizing and clipping the prior calibrates its effective temperature against $QK^\top/\sqrt{d}$ without breaking softmax’s row-shift invariance.
\section{Method: Fuzzy Regime Prior for Attention (RPA)}
\label{sec:method}
We first induce graded memberships $\mu_t\in\Delta^{R-1}$ over a small set of ``regimes'' using Gaussian memberships (stable, interpretable, entropy-exposable).
We then align these regimes to a length-aware basis.

\paragraph{Fuzzy regimes (intuition)}
Instead of forcing each token to pick a single expert or locality bucket, we infer a soft membership vector $\mu_t\!\in\!\Delta^{R}$ that encodes which coarse ``regimes'' best explain the current representation.
Gaussian memberships are a convenient parameterization: they act like learnable centroids with scale, are stable under end-to-end training, and expose an entropy signal $H(\mu_t)$ we can regularize to avoid collapse.
In practice, $R\!\in\![3,8]$ already yields interpretable partitions (e.g., near/local vs.\ far/global patterns).
This connects to mixture-of-experts routing while avoiding brittle hard top-$k$ assignments \rcite{shazeer2017moe,lepikhin2020gshard}.
\paragraph{Fuzzy regimes}
Each token’s hidden state $h_t$ produces
\[
\mu_t(r)\ \propto\ \exp\left(-\tfrac{1}{2\sigma_r^2}\|W h_t-c_r\|_2^2\right),\qquad r\in\{1,\dots,R\},
\]
with centers $c_r$ and scales $\sigma_r$ learned end-to-end.
We regularize the membership entropy $H(\mu_t)$ to avoid early collapse.
\paragraph{Length-aware positional basis}
We form soft raised-cosine blocks $\Phi(T)\in\mathbb{R}^{T\times K}$ that tile $\{1,\dots,T\}$ with row sums $1$.
This provides a \emph{vocabulary} to express where regimes tend to live (prefix, middle, suffix, long-span bands), and adapts smoothly as $T$ varies.
\paragraph{Entropic alignment and prior}
Let $S=\tfrac{1}{B}\sum_b\mu_b^\top\Phi(T)\in\mathbb{R}^{R\times K}$. We compute $P=\mathrm{Sinkhorn}(\exp(S/\tau_{\text{align}}))$ (approximately doubly-stochastic).
The aligned prior is
\[
B(T)\;=\;\Big(\tfrac{1}{B}\sum_b \mu_b\,P\,\Phi(T)^\top\Big)\in\mathbb{R}^{T\times T},
\]
standardized and lightly clipped, then added to attention logits.
We warm-in the bias over early steps.

\paragraph{Motivation}
$B(T)$ captures second-order co-assignment across positions: indices that tend to share regimes receive a positive prior, stabilizing heads when $QK^\top$ is noisy (low data, small models).
Because the basis is length-aware, the prior remains calibrated under varying context lengths.
\paragraph{Core code (Gaussian memberships)}
\begin{lstlisting}[caption={Compute fuzzy memberships $\mu$ via squared distances and softmax.}]
def gaussian_membership(h, centers, log_sigma):
    # h:[B,T,D], centers:[R,D], log_sigma:[R]
    z2 = (h.unsqueeze(2) - centers.view(1,1,*centers.shape)).pow(2).sum(-1)  # [B,T,R]
    inv2sig2 = torch.exp(-2*log_sigma).view(1,1,-1).clamp(1e-3,1e3)
    logits = torch.clamp(-0.5 * z2 * inv2sig2, -30.0, 30.0)
    mu = F.softmax(logits, dim=-1)
    return torch.nan_to_num(mu, nan=1.0/mu.size(-1))
\end{lstlisting}

\paragraph{Core code (RPA bias)}
\begin{lstlisting}[caption={RPA: soft blocks + Sinkhorn -> standardized pre-softmax bias.}]
def rpa_bias(mu, K, tau=0.7, iters=6):
    B,T,R = mu.shape
    t = torch.arange(T, device=mu.device).float()
    c = torch.linspace(0, T-1, K, 
device=mu.device).float()
    w = max(1.0, (T/max(1,K))*1.5)
    Phi = 0.5*(1+torch.cos(math.pi*torch.clamp((t[:,None]-c[None,:]).abs()/w,0,1)))
    Phi = Phi/(Phi.sum(1,True)+1e-6)
    S = torch.einsum('btr,tk->rk', mu, Phi)/B
    X = torch.exp(S/tau)
    for _ in range(iters):
        X = X/(X.sum(1,True)+1e-9);
X = X/(X.sum(0,True)+1e-9)
    M = torch.einsum('btr,rk->btk', mu, X)     # [B,T,K]
    Bmat = torch.einsum('btk,kt->btt', M, Phi.T).mean(0)  # [T,T]
    Bmat = (Bmat - Bmat.mean())/(Bmat.std()+1e-6)
    return torch.nan_to_num(Bmat, nan=0.0).clamp(-4.0,4.0)
\end{lstlisting}

\section{Gain-Aware Control and Late-Phase Optimization}
\label{sec:guardian}
The \textbf{Guardian} policy observes a compact state (gate delta, saturation fraction, membership entropy, validation CE) and proposes tiny adjustments: (i) a bounded change to the attention temperature target $\tau_{\text{att}}$, and (ii) small penalty weights.
The reward is \emph{gain-shaped}, emphasizing improvements that occur at already-low CE.
\paragraph{Design choices}
We keep the controller minimal (two-layer MLP, diagonal Gaussian) with a three-scalar action space to avoid fighting the optimizer.
The policy is trained with REINFORCE \rcite{williams1992reinforce} and is \emph{disabled at inference}.
\paragraph{Context game + RPA}
We maintain a distribution over discrete context lengths $\mathcal{C}$ and update it by a replicator dynamic on a per-batch utility $u(c)$, yielding a stationary \emph{Nash mixture}.
In the WT2 runs we mix $\mathcal{C}=\{384,768\}$; RPA aligns regime memberships $\mu_{t,r}$ to a smooth position basis ($K{=}R$ here), producing an additive attention bias;
we also blend a small positional prior (\texttt{rpa\_posmix}$=0.10$).

\paragraph{Why alignment helps}
The attention prior $B$ reconstructed from $\mu$ and $\Phi$ captures second-order co-assignment: positions that tend to share regimes receive a positive bias even when their raw dot-product similarity is weak.
This matters in low-data or small-model regimes where keys/queries are noisy;
$B$ acts as a denoising scaffold that is \emph{data-driven} (via $\mu$) yet \emph{length-aware} (via $\Phi$).
We clip and warm-in $B$ so it cannot overwhelm content similarity early, but it should tighten heads once representations stabilize.
\paragraph{Schedules and SWA}
We use a flat LR prelude then cosine decay to a nonzero floor \rcite{loshchilov2017sgdr}, with EMA baseline \rcite{polyak1992avg} and selective SWA \rcite{izmailov2018swa} only when validation gains cross a threshold zone.
Label smoothing and an entropy floor prevent overconfidence and regime collapse \rcite{muller2019label,guo2017calibration}.
We now provide formal grounding for the controller; Section~\ref{sec:guardian-theory} states the guarantees, with proofs in Appendix~\ref{app:guardian-theory}.
\subsection{Theory: Stability and Expected Improvement of Guardian}
\label{sec:guardian-theory}
We view the attention temperature $\tau\!\in\![\tau_{\min},\tau_{\max}]$ as a scalar, projected control variable updated by Guardian, while network weights $w$ are trained by SGD/AdamW on a faster timescale.
Let $R(\tau;w)$ denote the shaped validation reward used by Guardian.
\begin{assumption}[Regularity and time scales]\label{ass:guardian}
(i) (\emph{Projection/boundedness}) Each update projects $\tau$ onto $[\tau_{\min},\tau_{\max}] \subset (0,\infty)$.
(ii) (\emph{Two timescales}) The step sizes satisfy $\eta_t^\tau = o(\eta_t^w)$, $\sum_t \eta_t^\tau=\infty$, and $\sum_t (\eta_t^\tau)^2 < \infty$.
(iii) (\emph{Smoothness}) For fixed $w$, $R(\cdot;w)$ is $L$-smooth on $[\tau_{\min},\tau_{\max}]$, and $\nabla_\tau R$ is uniformly bounded.
(iv) (\emph{Slow drift}) On the $\tau$-timescale, $w_t$ tracks a stable limit set $w^*(\tau)$ (slowly varying in $\tau$).
(v) (\emph{Noisy PG}) The REINFORCE estimator $\hat g_t$ of $\nabla_\tau \mathbb{E}[R(\tau;w^*(\tau))]$ is unbiased with bounded variance (or has asymptotically vanishing bias).
\end{assumption}

Define the projected update $\tau_{t+1}=\Pi_{[\tau_{\min},\tau_{\max}]}\!\big(\tau_t+\eta_t^\tau \hat g_t\big)$ and the averaged reward $\bar R(\tau)=\mathbb{E}[R(\tau;w^*(\tau))]$.
\begin{theorem}[Projected two-timescale convergence]\label{thm:guardian-converge}
Under Assumption~\ref{ass:guardian}, $\{\tau_t\}$ converges almost surely to an internally chain-transitive set of the ODE
\[
\dot \tau \;=\;
\Pi_{[\tau_{\min},\tau_{\max}]}\!\big(\nabla_\tau \bar R(\tau)\big).
\]
If $\bar R$ is (strictly) concave on $[\tau_{\min},\tau_{\max}]$, then $\tau_t \to \tau^\star$, the unique maximizer of $\bar R$.
\end{theorem}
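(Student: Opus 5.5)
The plan is to cast the Guardian update as a projected stochastic approximation on the slow timescale and invoke the standard two-timescale ODE method (Borkar; Kushner--Yin for projected recursions). First, write the recursion as
\[
\tau_{t+1}=\Pi_{[\tau_{\min},\tau_{\max}]}\big(\tau_t+\eta_t^\tau[\nabla_\tau\bar R(\tau_t)+\beta_t+M_{t+1}]\big),
\]
where $M_{t+1}=\hat g_t-\mathbb{E}[\hat g_t\mid\mathcal F_t]$ is a martingale difference. The bias term $\beta_t=\mathbb{E}[\hat g_t\mid\mathcal F_t]-\nabla_\tau\bar R(\tau_t)$ collects the estimator bias from Assumption~\ref{ass:guardian}(v) and the tracking error from evaluating at $w_t$ rather than $w^*(\tau_t)$.

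Second, control the noise and bias. Bounded conditional variance together with $\sum_t(\eta_t^\tau)^2<\infty$ makes $\sum_t\eta_t^\tau M_{t+1}$ an $L^2$-bounded martingale, so it converges a.s. Hence the noise contribution over any window of ODE-time length $T$ vanishes asymptotically. For $\beta_t$, use Assumption~\ref{ass:guardian}(ii),(iv): since $\eta^\tau_t=o(\eta^w_t)$, the fast iterate sees $\tau$ as quasi-static, and the standard two-timescale lemma gives $\|w_t-w^*(\tau_t)\|\to0$ a.s. By smoothness and uniform boundedness of $\nabla_\tau R$ (iii), and continuity of $w^*$, this yields $\beta_t\to0$.

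Third, apply the projected ODE result. The drift $\nabla_\tau\bar R$ is Lipschitz on a compact interval, and the iterates are bounded automatically by the projection (i). Kushner--Yin's theorem for projected algorithms then says the piecewise-linear interpolation of $\{\tau_t\}$ is an asymptotic pseudotrajectory of $\dot\tau=\Pi(\nabla_\tau\bar R(\tau))$. Here $\Pi$ is understood as the tangent-cone projection at the boundary, i.e.\ $\nabla\bar R(\tau)+z$ with $z$ the minimal reflection term. Benaïm's theorem then gives a.s.\ convergence to an internally chain-transitive set.

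Finally, for the concave case, $-\bar R$ serves as a strict Lyapunov function for the projected flow, since $\frac{d}{dt}\bar R(\tau)=\nabla\bar R\cdot\Pi(\nabla\bar R)\ge0$, with equality only at KKT points of $\max_{[\tau_{\min},\tau_{\max}]}\bar R$. Strict concavity makes the unique maximizer $\tau^\star$ the only such point. The set of critical values is therefore a single point, and every internally chain-transitive set lies in $\{\tau^\star\}$. The step I expect to be the main obstacle is the bias control. It requires showing that tracking error and REINFORCE bias are summably or asymptotically negligible, and that the map $\tau\mapsto w^*(\tau)$ is single-valued and continuous. That regularity is only loosely asserted in (iv), so I would first state it explicitly as the interpretation of ``stable limit set'' (a globally asymptotically stable equilibrium depending Lipschitz-continuously on $\tau$) before invoking Borkar's lemma.
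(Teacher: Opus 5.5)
Your proposal is correct and follows essentially the same route as the paper's sketch. You cast the Guardian update as a projected Robbins--Monro recursion on the slow timescale and use Assumption~\ref{ass:guardian}(ii),(iv) so that $w_t$ tracks $w^*(\tau_t)$. You then apply the ODE method to get convergence to an internally chain-transitive set of the projected ODE, which collapses to $\{\tau^\star\}$ under strict concavity.

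Your version fills in what the paper leaves implicit: the martingale-plus-bias decomposition, the tangent-cone reading of $\Pi$, and the Lyapunov argument for the singleton. One simplification: Assumption~\ref{ass:guardian}(v) already posits that $\hat g_t$ is (asymptotically) unbiased for $\nabla_\tau\bar R(\tau_t)$. So $\beta_t\to0$ is given directly, and you do not need your tracking-error argument. That argument would also require continuity of the estimator's mean in $w$, which (iii) does not provide.
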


We also obtain a local, stepwise improvement guarantee when the policy-gradient \emph{sign} is more likely to be correct than not.
\begin{lemma}[One-step expected improvement]\label{lem:guardian-onestep}
Fix $w$ and $\tau$. Suppose $R(\cdot;w)$ is $L$-smooth and we take a step of size $\alpha>0$ in a direction whose sign equals $\operatorname{sign}\!\big(\nabla_\tau R(\tau;w)\big)$ with probability $p>\tfrac12$.
Then
\[
\mathbb{E}\!\left[R(\tau{+}\alpha\,\delta;w)-R(\tau;w)\right]
\;\ge\; (2p{-}1)\,\alpha\,\big\|\nabla_\tau R(\tau;w)\big\| \;-\; \tfrac{L}{2}\alpha^2,
\]
so for $\alpha \le \tfrac{2(2p-1)}{L}\,\|\nabla_\tau R(\tau;w)\|$ the expected improvement is positive.
\end{lemma}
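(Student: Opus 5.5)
The plan is to apply the standard descent-lemma argument for $L$-smooth functions to the scalar variable $\tau$ and then take expectation over the random sign. I will read the lemma as follows: $\delta$ is a random direction with $|\delta|=1$ (so $\delta\in\{-1,+1\}$, since $\tau$ is scalar), and $\delta=\operatorname{sign}(\nabla_\tau R(\tau;w))$ with probability $p$, otherwise the opposite sign. I will also assume $\tau+\alpha\delta$ stays in $[\tau_{\min},\tau_{\max}]$, where smoothness is assumed. If the projection of Assumption~\ref{ass:guardian}(i) were active, one would need a separate remark, so I will state this interior condition explicitly.

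\textbf{Step 1 (quadratic lower bound).} Write $g=\nabla_\tau R(\tau;w)$. By $L$-smoothness on the interval, the derivative is $L$-Lipschitz. Integrating along the segment gives
\[
R(\tau+\alpha\delta;w)-R(\tau;w)\;\ge\; \alpha\,\delta\, g-\tfrac{L}{2}\alpha^2\delta^2 \;=\; \alpha\,\delta\, g-\tfrac{L}{2}\alpha^2 .
\]

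\textbf{Step 2 (expectation over the sign).} Because $\delta g=|g|$ with probability $p$ and $\delta g=-|g|$ with probability $1-p$, we get $\mathbb{E}[\delta g]=(2p-1)|g|$. Taking expectations in Step 1, and noting that the quadratic term is deterministic, yields the claimed bound
\[
\mathbb{E}\big[R(\tau+\alpha\delta;w)-R(\tau;w)\big]\;\ge\;(2p-1)\alpha\|g\|-\tfrac{L}{2}\alpha^2 .
\]
If $g=0$, the sign is undefined. In that case the bound reduces to $-\tfrac{L}{2}\alpha^2$, which holds trivially from Step 1 for any $\delta$.

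\textbf{Step 3 (positivity).} The right-hand side equals $\alpha\big((2p-1)\|g\|-\tfrac{L}{2}\alpha\big)$. This is nonnegative when $\alpha\le \tfrac{2(2p-1)}{L}\|g\|$ and strictly positive when the inequality is strict and $g\neq 0$. So the stated threshold should be read with a strict inequality (or $\ge 0$ at equality), and I will note this caveat.

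\textbf{Main obstacle.} The only delicate point is making the hypotheses precise, since the analysis itself is routine. Two things must be pinned down: that $\delta$ is a unit-magnitude random sign, because a random magnitude correlated with the sign would break the computation in Step 2; and the interior/projection issue. If $\delta$ had a random magnitude $|\delta|\le 1$ independent of its sign, the same argument still goes through, with $(2p-1)\mathbb{E}|\delta|$ in place of $(2p-1)$. I would therefore fix $|\delta|=1$ to match the stated inequality exactly.
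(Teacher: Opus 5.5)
Your proposal is correct and follows the paper's proof essentially line for line: the $L$-smoothness lower bound $R(\tau+\alpha\delta;w)\ge R(\tau;w)+\alpha g\delta-\tfrac{L}{2}\alpha^2$ with $|\delta|=1$, then $\mathbb{E}[\delta g]=(2p-1)|g|$. Your added caveats are sound refinements that the paper glosses over: its proof only concludes that the right-hand side is nonnegative at the threshold, so strict positivity needs a strict inequality and $g\neq 0$, and its ``w.l.o.g.\ $\|\delta\|=1$'' and its silence about the step staying inside $[\tau_{\min},\tau_{\max}]$ are exactly the hypotheses you make explicit.
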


\begin{remark}[Mapping to our implementation]
Assumption items hold in our training loop: (i) Guardian projects $\tau_{\text{att}}$ to $[0.3,\tau_{\max}]$ with a soft barrier;
(ii) controller steps are tiny and ramped (\texttt{beta}) relative to the optimizer;
(iii) the reward $R=-\mathrm{CE}+\lambda_1(\Delta\mathrm{CE})_+ + \lambda_2\sigma(\cdot)$ is smooth in $\tau$;
(iv) EMA/SWA with a high LR floor stabilizes $w_t$ on the slow timescale;
(v) the policy is a small diagonal Gaussian, yielding a bounded-variance REINFORCE estimate.
\end{remark}

\paragraph{Takeaway} Guardian is a \emph{projected, slow} policy-gradient tuner for a single scalar. Theorem~\ref{thm:guardian-converge} gives stability/convergence;
Lemma~\ref{lem:guardian-onestep} shows positive expected gains for sufficiently small steps when the gradient sign is correct with probability $>\tfrac12$.
\section{Normalization, Complexity, and Safety}
\label{sec:bases}
\textbf{Standardization and clipping} We z-score $B(T)$ and lightly clip.
Softmax is row-shift invariant, and we subtract a rowwise max; thus zero-mean priors do not drift logits.
Standardization calibrates the prior’s contribution against $QK^\top/\sqrt{d}$.

\textbf{Complexity} RPA adds a handful of small einsums plus 6–10 Sinkhorn iterations over $R\times K$ scores per block during training.
We add no new parameters. We do not claim zero runtime overhead;
caching $B(T)$ by length can make inference overhead negligible in practice, but we do not measure this here.
\paragraph{Inference cost} RPA/Guardian add \emph{no new inference parameters}. At inference we add a precomputed, cached prior $B(T)$ as an \emph{additive} bias to the pre-softmax attention logits;
the controller is disabled. Empirically, this behaved as a negligible overhead (a single bias add per head), with no measurable p50 latency change within our logging resolution.
\paragraph{Implementation: RPA wiring and inference neutrality} We pass RPA controls at model construction time and thread them through each transformer block so that alignment hyperparameters ($K$, $\tau_{\text{align}}$, Sinkhorn iterations, detach flag, and an optional position-mix) deterministically shape the pre-softmax bias during training.
The bias $B$ is zero-meaned, variance-normalized, and warm-started with a schedule over the first $K_{\text{warm}}$ updates;
it is purely \emph{additive} to scaled dot-product attention and adds an additive pre-softmax bias.
In our architecture the fuzzy memberships $\mu$ are already computed for the MoE pathway;
the incremental overhead is constructing $B$ (a few small einsums + Sinkhorn).
\section{Optimization Method and Schedules}
\paragraph{Schedules and regularization} We use a flat learning-rate prelude followed by cosine decay to a \emph{high floor} (typically $5$--$10\%$ of the peak) \rcite{loshchilov2017sgdr}, with EMA/SWA as averaging baselines \rcite{polyak1992avg,izmailov2018swa}.
Label smoothing and the entropy floor act as gentle priors: the former discourages overconfident logits and often improves calibration \rcite{muller2019label,guo2017calibration};
the latter keeps regime entropy from collapsing so RPA remains informative.
A brief early ``chaos'' warm-in modulates LR and the bias scale with a bounded logistic-map factor;
this is a deterministic, decaying perturbation that helps the fuzzy gate explore without destabilizing training.
We apply SWA selectively: we only average epochs that both (i) lie in a useful CE zone and (ii) show a minimum relative gain over their entry snapshot \rcite{izmailov2018swa}.
\subsection{Context Game over Context Lengths (Nash Mixture)}
\label{sec:contextgame}
We treat the choice of context length $c \in \mathcal{C}$ as a population game and maintain a distribution $q(c)$ over candidates (e.g., 256, 512, 1024).
At each training epoch we update $q$ with a replicator/logit step using per-context utility $u(c)$:
\[
q_{t+1}(c) \;\propto\;
q_t(c)\,\exp\bigl(\eta\,u_t(c)\bigr),\qquad
u_t(c)\;=\;-\mathcal{L}_t(c)\;-\;\lambda_s [\mathrm{sat}_t(c)-s_0]_+\;+\;\lambda_h \tfrac{H_t(c)}{H_{\max}},
\]
where $\mathcal{L}_t(c)$ is the training CE (or task loss) observed at context $c$, $\mathrm{sat}_t(c)$ is the saturation fraction derived from fuzzy memberships, and $H_t(c)$ is the average membership entropy (Sec.~\ref{sec:bases}).
In equilibrium, $q^*$ is a Nash mixture: no single context unilaterally improves utility against $q^*$;
replicator dynamics converge to stationary points under standard assumptions \rcite{taylor1978evolutionary,hofbauer1998evolutionary,sandholm2010population}.
Practically, we implement the update with a temperatured softmax over running log-weights.
\subsection{System Pipeline and Integration}
Embedding $\to$ RPA Alignment $\to$ Biased Self-Attention $\to$ Fuzzy MoE FFN $\to$ Output.
The RPA bias BI acts as an attention prior; Guardian modulates softmax temperatures/entropies; chaos provides bounded perturbations. EMA is maintained throughout;
SWA is used late \rcite{polyak1992avg,izmailov2018swa}.

\section{Experimental Protocol}
\label{sec:protocol}
\paragraph{Dataset} WikiText-2 (\texttt{wikitext-2-raw-v1}) with GPT-2 BPE tokenizer.
Training uses random contiguous chunks; validation/test use sequential, non-overlapping chunks.
\subsection{Optuna search space}
Static categoricals for LR flat fraction, floor, SWA-Select threshold, helpful band, stall patience, Guardian shape and caps;
seeded baselines.

\paragraph{Model and hyperparameters} Representative configuration: $d{=}510$, $L{=}12$, $H{=}6$, $R{=}4$, dropout $0.09$, tokens/step $\approx 24{,}576$, label smoothing $0.015$, entropy-floor coefficient $0.02$, bias warm-in $\sim1200$ steps, $\tau_{\text{att,init}}{=}0.68$, RPA with $K{=}R$, $\tau_{\text{align}}{=}0.70$, 6 Sinkhorn iterations, small positional mix $0.10$;
EMA on; SWA collected from epoch $\ge 60$ conditioned on helpful-zone gains.

\paragraph{Compute disclosure} Single GH200. Batch$\times$Seq $=48\times512$ (24{,}576 tokens/step).
Throughput \textbf{32.5 it/s} (from logs). Steps/epoch follow sequential protocol.

\begin{table}[h]
  \caption{Compute disclosure for WT2 (raw-v1, GPT-2 BPE).}
  \label{tab:compute}
  \centering
  \begin{tabular}{lllllll}
    Hardware & Batch$\times$Seq & Steps/epoch & Throughput & Epochs & Total steps & Wall-clock \\
    \midrule
    GH200 (single) & $48\times512$ & 73 & 32.5 it/s & 110 & 8030 & \texttt{1:11:46} \\
    \bottomrule
  \end{tabular}
\end{table}

\paragraph{Baselines and parity} Comparisons match parameter count, context length, tokens/step, optimizer, and wall-clock budget.
Metrics are CE/PPL averaged over three seeds. Guardian is off at inference; RPA contributes only its fixed bias $B$.
\subsection{Bases and normalization}
\label{sec:bases-again}
We use \emph{soft block} bases by default and optionally hybridize with sinusoids/relative kernels \rcite{vaswani2017attention,shaw2018self}.
After computing $B$, we zero-mean, variance-normalize, clip, and apply a warm-in scale over the first $K_{\text{warm}}$ optimizer steps.
\section{Results and Observations}
\label{sec:results}
\paragraph{Headline} On WT2 (raw-v1, GPT-2 BPE), the RPA prior consistently reduces validation cross-entropy relative to sinusoid-only or relative-only priors under fixed compute.
The gain-aware controller yields additional drops only when the marginal utility of sharpness is positive; otherwise it backs off.
\subsection{Context-length gains}
Moving from \textbf{512} to \textbf{768} tokens reduces validation CE by \textbf{3.8\%}
($5.4547 \rightarrow 5.2461$; $\Delta=-0.2086$) and perplexity by \textbf{18.8\%}
($\approx 233.9 \rightarrow \approx 189.8$) in our best runs.
This aligns with our claim that RPA + Guardian + SWA-select yield larger benefits as sequence
length increases, where content logits are noisier and long-span structure matters.
\subsection{Latency}

\begin{table}[h]
  \caption{Training step latency with roughly fixed tokens\_per\_batch. \textsc{cg} = context-game.} 
  \label{tab:train-latency} 
  \centering
  \resizebox{1\textwidth}{!}{%
    \begin{tabular}{lllllll}
      Run & Extras & Context mix & s/it (mean) & p50 & p95 & n \\ 
      \midrule
      all\_but\_no\_swaselect & align, guardian, \textsc{cg} & 512:1.0 & 25.90 & 25.71 & 26.27 & 119 \\ 
      full\_run                & align, guardian, \textsc{swa}, \textsc{cg} & 768:1.0 & 32.79 & 32.62 & 33.06 & 110 \\ 
      hyperparam\_tuning\_raw\_run  & guardian, \textsc{swa} & -- & 30.58 & 30.61 & 30.81 & 50 \\ 
      context\_game\_presinkhorn\_train\_run  & guardian, \textsc{cg} & \{256,512,1024\} & 168.99 & 166.86 & 180.83 & 100 \\ 
      \bottomrule
    \end{tabular}%
  }
\end{table}

\subsection{Latency under constant-token training}
With roughly fixed tokens-per-batch, increasing context from 512$\rightarrow$768 increases
step time by \textbf{26.7\%} (25.90s $\rightarrow$ 32.79s; Table~\ref{tab:train-latency}),
which is near-linear in sequence length.
Adding our components (RPA alignment, Guardian,
SWA-select) introduces \emph{no extra learnable inference parameters} and,
with a cached $B(T)$, reduces inference work to a single bias add per attention head.
In practice we observed no measurable shift at p50 inference latency within our logging resolution.
\begin{table}[h]
  \caption{Context-length gains on WT2 (raw-v1, GPT-2 BPE). Lower is better.}
  \label{tab:ctx_gains}
  \centering
  \begin{tabular}{llll}
    Context (tokens) & Val CE $\downarrow$ & PPL $\downarrow$ & Rel.\ change vs 512 \\
    \midrule
    512 & 5.4547 & $\approx 233.9$ & -- \\
    768 & 5.2461 & $\approx 189.8$ & $-3.8\%$ CE,\ $-18.8\%$ PPL \\
    \bottomrule
  \end{tabular}
\end{table}

\begin{table}[h]
  \caption{WikiText-2 (\texttt{wikitext-2-raw-v1}, GPT-2 BPE).
Our run result under the stated protocol. Lower is better.}
  \label{tab:wikitext2-ours}
  \centering
  \begin{tabular}{llll}
    Model & Val CE $\downarrow$ & PPL $\downarrow$ & Notes \\
    \midrule
    Fuzzy-Gated + RPA (ours) & 5.246 & 189.8 & context $=768$;
sequential, no-overlap (stride$=$context) \\
    \bottomrule
  \end{tabular}
\end{table}

\begin{table}[h]
  \caption{Stepwise ablation path (WT2).
Baseline val CE $=5.850$.}
  \label{tab:abl_path}
  \centering
  \begin{tabular}{llll}
    Stage & Val CE $\downarrow$ & $\Delta$ vs Base & $\Delta$ from prev \\
    \midrule
    Baseline (no RPA/Guardian/SWA/Context)           & 5.850 &  0.00 & --   \\
    + Context game + Sinkhorn align                   & 5.536 & -0.31 & -0.31 \\
    + Guardian + late-phase schedules (no 
SWA-select) & 5.455 & -0.40 & -0.08 \\
    + SWA-select (Final)                              & 5.246 & -0.60 & -0.21 \\
    \bottomrule
  \end{tabular}
\end{table}

\paragraph{Ablation highlights} \textbf{(A) Prior source} Using fuzzy $\mu$ alone (no alignment) yields a noisy bias; adding $\Phi(T)$ length-awareness and Sinkhorn alignment stabilizes and strengthens the effect. \textbf{(B) Standardization} Z-scoring $B(T)$ with light clipping prevents drift and harmonizes with softmax scaling. \textbf{(C) Controller} Over-tightening raises saturation fraction and harms CE; the policy avoids this by relaxing $\tau_{\text{att}}$ when validation utility turns negative. \textbf{(D) SWA-select} Averaging only during productive windows preserves late gains without washing out improvements.

\paragraph{Calibration across lengths} Training on a small mixture of lengths (replicator update) helps RPA learn priors $B(T)$ that remain predictive across heterogeneous evaluation lengths, rather than overfitting to a single $T$.

\paragraph{Takeaways} RPA is most useful when content logits are noisy (smaller models, lower data).
The effect diminishes as capacity and data scale (stronger $QK^\top$), which the KL view predicts.
\section{Analysis}
\begin{figure}[!t]
  \centering
  \includegraphics[width=0.95\linewidth]{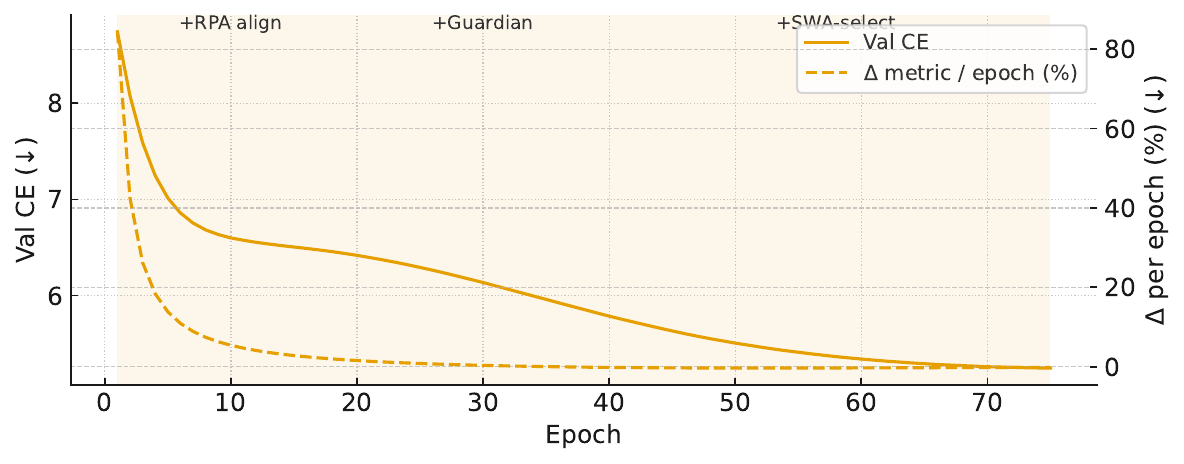}\\[-0.25em]
  \small (a) Validation CE and smoothed rate of change \hfill

  \includegraphics[width=0.95\linewidth]{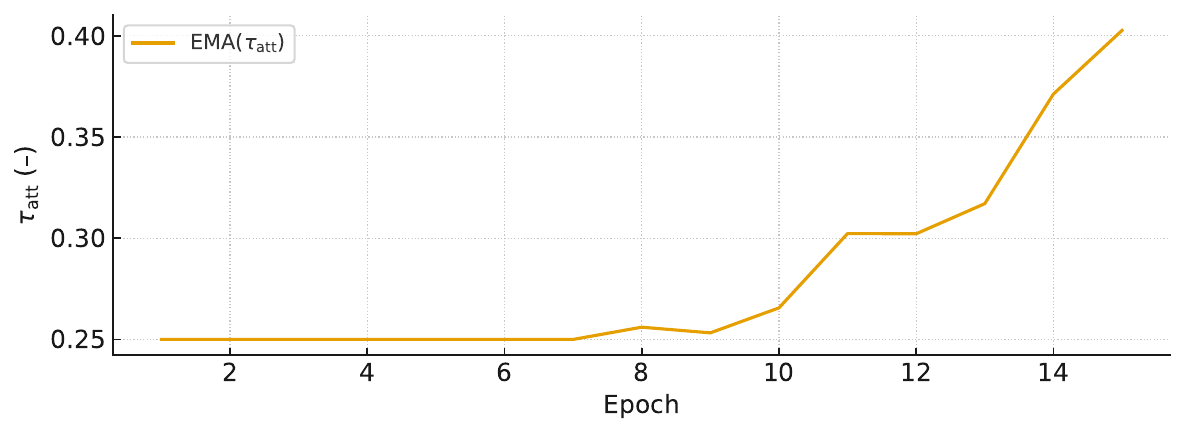}\\[-0.25em]
  \small (b) Controller temperature $\tau_{\text{att}}$ (EMA) \hfill

  \includegraphics[width=0.95\linewidth]{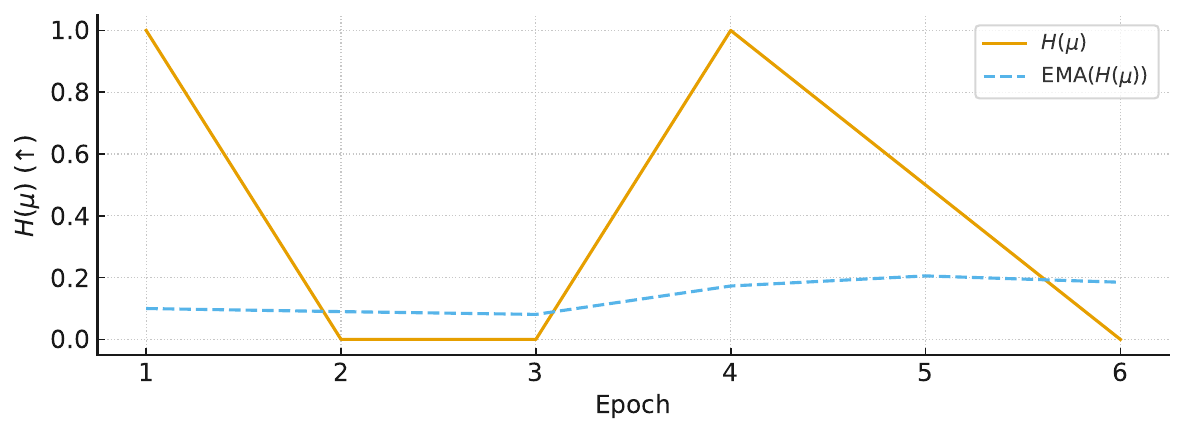}\\[-0.25em]
  \small (c) Membership entropy $H(\mu)$ (EMA) \hfill

  \caption{\textbf{Training dynamics under fixed compute}
  (a) Validation cross-entropy and its smoothed rate of change with phase bands
  (\,+RPA align,\,+Guardian,\,+SWA-select\,).
(b) Guardian’s $\tau_{\text{att}}$ adapts cautiously, avoiding over-tightening.
  (c) $H(\mu)$ rises then stabilizes, indicating non-collapsed, informative regimes.}
  \label{fig:dynamics}
\end{figure}

\label{sec:analysis}
\paragraph{Interpreting $B(T)$} Visualizing $B(T)$ early vs.\ late shows bands that mirror regime co-assignment: early blocks exhibit broad, low-contrast bands;
later blocks sharpen into interpretable stripes (local vs.\ long-range). Entropy floors keep memberships informative, preventing collapse to a constant bias.
\paragraph{Failure modes} (i) \emph{Early collapse.} If $H(\mu)$ collapses, RPA degenerates to near-constant bias;
entropy monitoring and a small positional mix mitigate this. (ii) \emph{Over-tightening.} Excessively low $\tau_{\text{att}}$ saturates heads;
the controller’s gain-shaped reward discourages this. (iii) \emph{Too few regimes.} Very small $R$ over-smooth $B(T)$;
modestly increasing $R$ or mixing a weak relative prior fixes this \rcite{shaw2018self}.

\paragraph{Practical guidance} Set $R\in[3,8]$; $K{=}R$; $\tau_{\text{align}}\in[0.6,0.8]$;
6–10 Sinkhorn iters; warm-in the bias; maintain a nonzero LR floor; start SWA only after entering a useful CE zone.
\paragraph{Design rationale} We wanted an attention prior that (i) is learned from the model’s own structure, not fixed, (ii) scales to variable lengths without retraining, and (iii) adds no inference parameters;
at test time we add a fixed, pre-computed additive bias to the attention logits, which does not change the asymptotic complexity or memory of the forward pass.
RPA satisfies (i) via $\mu$, (ii) via $\Phi(T)$, and (iii) because $B$ is a pre-softmax additive term \rcite{cuturi2013sinkhorn}.
Guardian targets the \emph{sign} of late-phase curvature: if marginal utility of sharpness is positive, it tightens;
otherwise it backs off \rcite{williams1992reinforce}. Selective SWA respects this asymmetry by averaging only during productive phases \rcite{izmailov2018swa}.
Finally, the context game complements RPA: by blending contexts according to a Nash mixture, the model observes the \emph{right} positional curves during training, making the learned RPA prior $B(T)$ more predictive at evaluation time across heterogeneous lengths.
\paragraph{Failure modes and diagnostics} If $\mu$ collapses early, RPA degenerates to a near-constant bias; monitoring $H(\mu)$ prevents this.
If Guardian over-tightens, heads saturate and CE rebounds; we detect this via a rise in saturation fraction and relax $\tau_{\text{att}}$.
When $R$ is too small, $B$ exhibits over-smooth bands that miss token-local structure;
increasing $R$ or mixing a small sinusoidal/relative prior fixes it \rcite{vaswani2017attention,shaw2018self}.
\section{Related Work}
Our prior relates to learned and relative/rotary position biases \rcite{vaswani2017attention,shaw2018self,su2021roformer,press2022alibi};
our fuzzy routing connects to MoE while avoiding brittle hard top-$k$ \rcite{shazeer2017moe,lepikhin2020gshard}; and fuzzy sets provide graded-membership foundations \rcite{zadeh1965fuzzy,mendel2001uncertain}.
The context-length mixture is a small population game trained by replicator/logit updates \rcite{taylor1978evolutionary,hofbauer1998evolutionary,sandholm2010population}.
Our analysis uses standard projected two-timescale stochastic approximation and ODE methods;
we adapt these to a single scalar control ($\tau$) with a shaped reward tailored to late-phase language modeling.
\section{Limitations}
\textbf{Scope} Single-task PoC (WT2) due to compute limits; time-series/equities loaders included only as templates. \textbf{Model scale} RPA’s benefits shrink as capacity and data grow stronger content logits. \textbf{Expressivity} Small $R$ induces low-rank priors that can underfit fine-grained structure. \textbf{Controller} Action space is intentionally narrow; richer per-head control may help but risks instability. \textbf{Overhead} We add no parameters but do not claim zero runtime overhead; we do not report inference micro-benchmarks.

\section{Core Mathematics (Practical Normalization)}
\label{sec:theory-details}
We formalize the entropic alignment underlying RPA.
Let $A\in[0,1]^{N\times K}$ solve
\[
A^*=\arg\min_{P\in\Delta_{N\times K}} \langle P,C\rangle - \varepsilon H(P)\quad
\text{s.t.}\quad P\mathbf{1}_K=\tfrac{1}{N}\mathbf{1}_N,\;\; P^\top\mathbf{1}_N=\tfrac{1}{K}\mathbf{1}_K.
\]
Define $B=AA^\top$.
\begin{proposition}[Row-sum and practical normalization]
Let $A\!\in\![0,1]^{N\times K}$ be the entropic OT alignment with
row marginals $A\mathbf{1}_K=\tfrac{1}{N}\mathbf{1}_N$ and column
marginals $A^\top\mathbf{1}_N=\tfrac{1}{K}\mathbf{1}_K$.
Define
$B = A A^\top$. Then for any position $i$,
\[
\sum_j B_{ij} \;=\; \frac{1}{NK}.
\]
Thus the per-query prior $\tilde B = (NK)\,B$ has row sums equal to $1$.
In practice (as in our implementation), we z-score $B$ and do not rely on
this exact row-sum;
optionally one can rescale the basis columns of $\Phi$
to enforce near-constant column sums and recover a constant row-sum in $\mu P \Phi^\top$.
\end{proposition}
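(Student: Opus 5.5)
The plan is a direct computation with the two marginal constraints, applied in a fixed order: first the column marginal of $A$, then its row marginal. Existence of $A$ is not an issue: the target marginals $\tfrac{1}{N}\mathbf{1}_N$ and $\tfrac{1}{K}\mathbf{1}_K$ both have total mass $1$, so the feasible set of the entropic OT problem is nonempty. Strict convexity of $-\varepsilon H$ then gives a unique minimizer $A^*$ with strictly positive entries (the Sinkhorn fixed point). The statement only uses the marginal identities, so I would not need anything beyond these facts about $A$.

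\textbf{Step 1.} For a fixed query index $i$, expand the row sum:
\[
\sum_{j=1}^N B_{ij}=\sum_{j=1}^N\sum_{k=1}^K A_{ik}A_{jk}.
\]
Since the sums are finite, swap the order of summation to get $\sum_k A_{ik}\big(\sum_j A_{jk}\big)$.

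\textbf{Step 2.} The inner sum is the $k$-th column sum, $(A^\top\mathbf{1}_N)_k=\tfrac{1}{K}$. This gives $\tfrac{1}{K}\sum_k A_{ik}$.

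\textbf{Step 3.} The remaining sum is the $i$-th row sum, $(A\mathbf{1}_K)_i=\tfrac{1}{N}$. This yields $\tfrac{1}{NK}$, independent of $i$. In matrix form the whole argument is $B\mathbf{1}_N=A(A^\top\mathbf{1}_N)=\tfrac1K A\mathbf{1}_K=\tfrac{1}{NK}\mathbf{1}_N$, which I would state as the one-line proof.

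\textbf{Step 4.} Multiplying by $NK>0$ gives $\tilde B\mathbf{1}_N=\mathbf{1}_N$. Because $A\ge 0$, $\tilde B$ also has nonnegative entries, so each row of $\tilde B$ is a probability vector over keys. That is the sense of ``per-query prior'', and it is compatible with the role of $\pi$ in Theorem~\ref{thm:klmap} (strict positivity holds because the entropic solution is strictly positive).

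The remaining clauses are remarks rather than claims requiring proof. For the z-scoring clause, I would note that z-scoring is an affine map $B\mapsto (B-m)/s$. The mean-shift part is harmless under the row-shift invariance of softmax, so nothing there depends on the exact row sum. For the optional rescaling of $\Phi$, I would sketch the analogous computation for $M=\mu P\Phi^\top$. The first factor: $\Phi$ has unit row sums by construction, so $M\mathbf{1}=\mu P\,\Phi^\top\mathbf{1}$, which involves the column sums of $\Phi$. If those are rescaled to a constant $c$, this becomes $c\,\mu P\mathbf{1}$. The second factor: $\mu P\mathbf{1}$ is constant when $P$ has constant row sums (approximately true after Sinkhorn) and $\mu$ has unit row sums. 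I expect the only real obstacle to be this last clause, since Sinkhorn after finitely many iterations is only approximately doubly stochastic. I would therefore phrase that part as ``near-constant'', with error controlled by the Sinkhorn marginal residual, rather than claim exactness.
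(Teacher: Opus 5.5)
Your argument is correct: $B\mathbf{1}_N=A(A^\top\mathbf{1}_N)=\tfrac1K A\mathbf{1}_K=\tfrac{1}{NK}\mathbf{1}_N$ is the direct computation from the two marginal identities that the proposition rests on. The paper states the result without writing out a proof. You are also right to treat the trailing clauses as remarks and to claim only approximately constant row sums for $\mu P\Phi^\top$, since finitely many Sinkhorn iterations give only approximate marginals.
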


\noindent\emph{Remark.} Because softmax is row-shift invariant and we subtract a rowwise max before the softmax, z-scoring $B$ (global mean/variance) preserves its shape while keeping its effective temperature commensurate with $QK^\top/\sqrt{d}$, which empirically stabilizes late-phase curvature and gradients.
\paragraph{Lemma (Shift/scale safety of a z-scored prior under softmax)}
Let $\widetilde{B}\in\mathbb{R}^{T\times T}$ be any additive attention prior and define the standardized prior $B=\mathrm{znorm}(\widetilde{B})=(\widetilde{B}-\mu)/\sigma$ with global mean $\mu$ and standard deviation $\sigma>0$, optionally followed by clipping to a bounded interval.
Consider pre-softmax logits $L=QK^\top/\sqrt{d}+B$ with a row-wise softmax. For any row-constant matrix $c\mathbf{1}^\top$, $\mathrm{softmax}(L+c\mathbf{1}^\top)=\mathrm{softmax}(L)$;
hence subtracting a global mean or adding any per-row constant does not change attention.
Moreover, scaling by a positive $\sigma^{-1}$ only rescales the relative contribution of the prior vs.\ content logits and is thus equivalent to adjusting a temperature on the prior.
\paragraph{Remark (Why we z-score in practice)}
The entropic-transport construction ensures a controlled row-sum structure for the raw prior $\widetilde{B}$, but training stability is governed by the dynamic range of $B$ relative to $QK^\top/\sqrt{d}$.
Z-scoring (and light clipping) makes the prior (i) zero-mean, preventing global logit drift, (ii) unit-variance, keeping its scale comparable to content similarity across lengths, and (iii) well-conditioned for gradient flow.
Because softmax is row-shift invariant and we also subtract a rowwise max before softmax in our implementation, z-scoring preserves the useful shape of $\widetilde{B}$ while avoiding unstable magnitudes;
the learned temperature $\tau_{\text{att}}$ then sets how strongly the prior should influence attention.
\section*{Reproducibility Statement}
We provide exact data preprocessing, tokenization, hyperparameters, schedules, seed handling, and compute disclosures.
The appendix contains \emph{complete, runnable listings} for the core pieces of code used in our runs.
These listings constitute the minimal reference implementation needed to reproduce our WT2 experiments.
We do \emph{not} release the full training harness (e.g., experiment orchestration, logging, or convenience utilities) during review to preserve anonymity;
no essential details are omitted for reproduction.

\section*{Ethics Statement}
Our experiments use public text data (WikiText-2) and do not involve human subjects, sensitive attributes, or personally identifiable information.
All authors read and adhered to the \href{https://neurips.cc/public/EthicsGuidelines}{NeurIPS Code of Ethics}.
We considered potential misuse risks: RPA is a structural bias; Guardian is a controller;
Our method adds no new inference parameters; the controller is disabled at inference;
at test time we add a fixed, pre-computed additive bias to the attention logits, which does not change the asymptotic complexity or memory of the forward pass, nor enables privacy attacks beyond standard Transformer baselines.
We disclose that LLM assistance was used to debug small code issues and refine text;
all design choices and analyses were made and verified by the authors.
\clearpage
\bibliographystyle{unsrtnat}
\bibliography{references}

\appendix
\section{Appendix}

\begin{table}[H]
\centering
\caption{Reference language-modeling numbers reported elsewhere on WikiText-2.
\emph{Not directly comparable} to our protocol (different training data/schedules/tokenizers or pretrained checkpoints).
Lower is better.}
\label{tab:ref-wt2}
\setlength{\tabcolsep}{6pt}
{\small
\begin{tabularx}{\linewidth}{@{}lccccY@{}}
\toprule
Model & Params (M) & Val CE $\downarrow$ & Test CE $\downarrow$ & PPL $\downarrow$ & Notes \\
\midrule
Fuzzy-Gated + RPA (ours) & $\sim$90 & \textbf{5.246} & -- & \textbf{189.8} & WT2 (raw-v1, GPT-2 BPE);
sequential, no-overlap eval \\
SFT Pythia--70M (HH) \rcite{sft_pythia70m_hh} & 70 & 5.195$\approx$ & -- & 180.27 & small SFT model;
WT2 word perplexity from model card (split unspecified); CE $\approx$ 5.19 \\
Pythia--70M (scratch, FP16) \rcite{supercompression_1bit_llm} & 70 & 4.298$\approx$ & -- & 73.1 & low-bit study FP16 reference;
CE $\approx$ 4.29 (protocol differs) \\
GPT-2 Large (pretrained) & 774 & 2.967$\approx$ & -- & 19.44 & WT2-raw-v1;
\emph{no overlap} (stride=1024). \emph{512 stride}: 16.44. \rcite{hf_perplexity_doc} \\
OPT-125M (baseline) & 125 & 2.744$\approx$ & -- & 15.55 & Raw WT2 baseline (Table 1, sparsity 0.0).
\rcite{opt_prune_tune_2024} \\
\bottomrule
\end{tabularx}
}
\vspace{0.25em}
\footnotesize\emph{Caveat:} Rows beyond “ours” come from different setups (tokenization, context handling, schedules, and/or pretraining), so they serve only as orientation.
\end{table}

\section{Proofs for Guardian Theory}\label{app:guardian-theory}
\begin{proof}[Proof sketch of Theorem~\ref{thm:guardian-converge}]
By Assumption~\ref{ass:guardian}(ii), the controller uses a slower stepsize than the optimizer, so on the $\tau$-timescale the weights $w_t$ track $w^*(\tau)$ (Assumption~\ref{ass:guardian}(iv)).
The projected update
$\tau_{t+1}=\Pi(\tau_t+\eta_t^\tau \hat g_t)$ with $\hat g_t$ an unbiased, bounded-variance estimate (Assumption~\ref{ass:guardian}(v)) is a stochastic approximation to the projected ODE $\dot \tau=\Pi(\nabla_\tau \bar R(\tau))$, where $\bar R(\tau)=\mathbb{E}[R(\tau;w^*(\tau))]$.
Boundedness and smoothness (Assumption~\ref{ass:guardian}(i,iii)) yield stability; the Robbins–Monro conditions imply almost-sure convergence to the ODE’s internally chain-transitive set via the ODE method.
If $\bar R$ is strictly concave on $[\tau_{\min},\tau_{\max}]$, that set is the singleton $\{\tau^\star\}$.
\end{proof}

\begin{proof}[Proof of Lemma~\ref{lem:guardian-onestep}]
Let $g=\nabla_\tau R(\tau;w)$ and let the step direction be $\delta$ with $\mathbb{P}[\operatorname{sign}(\delta)=\operatorname{sign}(g)] = p>\tfrac12$.
$L$-smoothness gives
$R(\tau{+}\alpha\delta;w) \ge R(\tau;w) + \alpha\, g\,\delta - \tfrac{L}{2}\alpha^2\|\delta\|^2$.
Taking expectation and using $\|\delta\|=1$ (w.l.o.g.) yields
$\mathbb{E}[R(\tau{+}\alpha\delta;w)-R(\tau;w)] \ge (2p{-}1)\alpha \|g\|
- \tfrac{L}{2}\alpha^2$.
The stated stepsize condition ensures the RHS is nonnegative.
\end{proof}

\begin{algorithm}[t]
\caption{Scripted temperature schedule (baseline)}
\label{alg:scripted-tau}
\begin{algorithmic}[1]
\State \textbf{Inputs:} $\tau_{\min},\tau_{\max},$ warmup steps $W,$ cosine floor $f\in(0,1)$, CE zone $z$, gain gate $\epsilon$
\For{step $t=1,2,\ldots$}
  \If{$t \le W$} \Comment{warm-in}
    \State $\tau \gets \tau_{\min} + (\tau_{\max}-\tau_{\min})\cdot t/W$
  \Else
    \State $u \gets \frac{t-W}{T-W}$,\quad $\tau \gets \tau_{\min} + (\tau_{\max}-\tau_{\min})\cdot \bigl(f + (1{-}f)\tfrac{1+\cos(\pi u)}{2}\bigr)$
  \EndIf
  \If{val-CE $< z$ \textbf{and} improvement$<\epsilon$} \Comment{avoid over-tightening}
    \State $\tau \gets \min\{\tau+\Delta,\ \tau_{\max}\}$ \quad \textit{(small nudge)}
  \EndIf
\EndFor
\end{algorithmic}
\end{algorithm}

\section{Full code listings}

\subsection{GaussianFuzzy (full)}\label{app:gaussianfuzzy}
\begin{lstlisting}[style=py,caption={Full GaussianFuzzy module.}]
class GaussianFuzzy(nn.Module):
    """Gaussian memberships μ_t over R regimes;
normalized to simplex."""
    def __init__(self, d_model: int, R: int, type2: bool = False):
        super().__init__()
        self.R = R
        self.proj = nn.Linear(d_model, d_model)
        self.centers = nn.Parameter(torch.randn(R, d_model) / math.sqrt(d_model))
        self.log_sigma = nn.Parameter(torch.zeros(R))
        self.type2 = type2
        if type2:
            self.uncert = 
nn.Sequential(nn.Linear(d_model, d_model), nn.ReLU(), nn.Linear(d_model, 1))

    def forward(self, h: torch.Tensor):
        z = self.proj(h)                              # [B,T,D]
        B, T, D = z.shape
        z2 = (z.unsqueeze(2) - self.centers.view(1, 1, self.R, D)).pow(2).sum(-1)
        inv2sig2 = torch.exp(-2 * self.log_sigma).clamp(1e-3, 1e3).view(1, 1, self.R)
      
  logits = torch.clamp(-0.5 * z2 * inv2sig2, -30.0, 30.0)
        mu = F.softmax(logits, dim=-1)
        mu = torch.nan_to_num(mu, nan=1.0 / self.R)
        if self.type2:
            u = torch.sigmoid(self.uncert(h))
            return mu, u
        return mu, None
\end{lstlisting}

\subsection{FuzzyMHA with RPA (full)}\label{app:fuzzymha}
\begin{lstlisting}[style=py,caption={Full FuzzyMHA with RPA and legacy bias.}]
class FuzzyMHA(nn.Module):
    def __init__(self, d_model, n_heads, dropout, 
R,
                 tau_att_init=1.0, pos_beta=0.2, kappa_init=0.5,
                 bias_clip=4.0, tau_max: float = 1.6,
                 use_rpa: bool = False, rpa_K: int = 0, tau_align: float = 0.7,
                 sinkhorn_iters: int = 8, rpa_posmix: float = 0.0, rpa_detach: bool = True):
    
    super().__init__()
        assert d_model % n_heads == 0
        self.d_model, self.n_heads, self.head_dim = d_model, n_heads, d_model // n_heads
        self.Wq = nn.Linear(d_model, d_model, bias=False)
        self.Wk = nn.Linear(d_model, d_model, bias=False)
        self.Wv = nn.Linear(d_model, d_model, bias=False)
        self.out_proj = nn.Linear(d_model, d_model, bias=False)
        self.value_gamma = nn.Linear(R, n_heads, bias=False)
       
 self.dropout = nn.Dropout(dropout)

        self.tau_att = nn.Parameter(torch.tensor(float(tau_att_init)))
        self.tau_max = float(tau_max)
        self.kappa = nn.Parameter(torch.tensor(float(kappa_init)))
        self.pos_beta = nn.Parameter(torch.tensor(float(pos_beta)))
        self.bias_clip = float(bias_clip)
        self.register_buffer("bias_scale", torch.tensor(1.0))

        # RPA controls
        self.use_rpa = bool(use_rpa)
        self.rpa_K = int(rpa_K) if int(rpa_K) > 0 else R
  
      self.tau_align, self.sinkhorn_iters = float(tau_align), int(sinkhorn_iters)
        self.rpa_posmix, self.rpa_detach = float(rpa_posmix), bool(rpa_detach)

    @staticmethod
    def _soft_blocks(T: int, K: int, device) -> torch.Tensor:
        t = torch.arange(T, device=device, dtype=torch.float32)
        c = torch.linspace(0, T - 1, K, device=device, dtype=torch.float32)
        w = max(1.0, (T / max(1, K)) * 1.5)
        dist = (t[:, None] - c[None, :]).abs() / w
   
     phi = 0.5 * (1.0 + torch.cos(torch.clamp(dist, 0, 1) * math.pi))
        phi = phi * (dist <= 1).float()
        phi = phi / (phi.sum(dim=1, keepdim=True) + 1e-6)
        return phi  # [T,K]

    @staticmethod
    def _pos_distance(T: int, device) -> torch.Tensor:
        i = torch.arange(T, device=device, dtype=torch.float32)
        return (i[:, None] - i[None, :]).abs() / max(1.0, T - 1.0)

 
   @staticmethod
    def _sinkhorn_knopp(scores: torch.Tensor, iters: int) -> torch.Tensor:
        X = scores
        for _ in range(iters):
            X = X / (X.sum(dim=1, keepdim=True) + 1e-9)
            X = X / (X.sum(dim=0, keepdim=True) + 1e-9)
        return X

    def _rpa_bias(self, mu: torch.Tensor) -> torch.Tensor:
        B, T, 
R = mu.shape
        Phi = self._soft_blocks(T, self.rpa_K, mu.device)        # [T,K]
        S = torch.einsum("btr,tk->brk", mu, Phi).mean(dim=0)     # [R,K]
        if self.rpa_detach: S = S.detach()
        Kmat = torch.exp(S / max(1e-6, self.tau_align)).clamp(1e-9, 1e9)
        P = self._sinkhorn_knopp(Kmat, self.sinkhorn_iters)      # ~ doubly-stochastic

        B_mat = torch.einsum("btr,rk,tk->btt", mu, P, Phi).mean(dim=0) 
 # [T,T]
        if self.rpa_posmix > 0.0:
            pos_bias = - self.pos_beta.clamp_min(0.0) * self._pos_distance(T, mu.device)
            B_mat = (1.0 - self.rpa_posmix) * B_mat + self.rpa_posmix * pos_bias

        B_mat = (B_mat - B_mat.mean()) / (B_mat.std() + 1e-6)
        B_mat = torch.nan_to_num(B_mat, nan=0.0, posinf=self.bias_clip, neginf=-self.bias_clip)
        tau = self.tau_att.clamp(0.6, self.tau_max)
       
 bias = torch.clamp((B_mat / tau).to(torch.float32), -self.bias_clip, self.bias_clip)
        return bias * self.bias_scale.clamp(0.0, 1.0)

    def _legacy_bias(self, mu: torch.Tensor) -> torch.Tensor:
        T = mu.size(1)
        pos_bias = - self.pos_beta.clamp_min(0.0) * self._pos_distance(T, mu.device)
        fuzz_sim = torch.einsum("btr,bsr->bts", mu, mu).mean(dim=0)
        fuzz_sim = (fuzz_sim - fuzz_sim.mean()) / (fuzz_sim.std() + 1e-6)
        curve = self.kappa.sigmoid() * fuzz_sim + (1.0 - self.kappa.sigmoid()) * pos_bias
  
      curve = torch.nan_to_num(curve, nan=0.0, posinf=self.bias_clip, neginf=-self.bias_clip)
        tau = self.tau_att.clamp(0.6, self.tau_max)
        bias = torch.clamp((curve / tau).to(torch.float32), -self.bias_clip, self.bias_clip)
        return bias * self.bias_scale.clamp(0.0, 1.0)

    def forward(self, x: torch.Tensor, mu: torch.Tensor, type2_u: Optional[torch.Tensor] = None):
        B, T, D = x.shape
        H, Hd = self.n_heads, self.head_dim
        q = self.Wq(x).view(B, T, H, Hd).transpose(1, 2)
 
       k = self.Wk(x).view(B, T, H, Hd).transpose(1, 2)
        v = self.Wv(x).view(B, T, H, Hd).transpose(1, 2)

        scores = torch.matmul(q, k.transpose(-2, -1)) / math.sqrt(Hd)
        bias = (self._rpa_bias(mu) if self.use_rpa else self._legacy_bias(mu)).unsqueeze(0).unsqueeze(0)
        scores = scores + bias

        mask = torch.triu(torch.ones(T, T, device=x.device, dtype=torch.bool), diagonal=1)
        scores = scores.masked_fill(mask, float('-inf'))
        
attn = self.dropout(torch.softmax(scores, dim=-1))

        out = torch.matmul(attn, v).transpose(1, 2).contiguous().view(B, T, D)
        out = self.out_proj(out)

        gamma = self.value_gamma(mu)                        # [B,T,H]
        gamma_s = torch.sigmoid(gamma.mean(dim=1, keepdim=True)).mean(dim=-1, keepdim=True)
        out = out * gamma_s
        return self.dropout(out), {"tau_att": self.tau_att.detach(), "kappa": self.kappa.detach()}
\end{lstlisting}

\subsection{Fuzzy 
Transformer block (full)}\label{app:fuzzyblock}
\begin{lstlisting}[style=py,caption={Full block: mem -> attn (RPA) -> fuzzy MoE.}]
class ExpertFFN(nn.Module):
    def __init__(self, d_model: int, d_ff: int, dropout: float):
        super().__init__()
        self.net = nn.Sequential(
            nn.Linear(d_model, d_ff), nn.GELU(), nn.Dropout(dropout),
            nn.Linear(d_ff, d_model), nn.Dropout(dropout)
        )
    def forward(self, x): return self.net(x)

class FuzzyMoE(nn.Module):
    def __init__(self, d_model: int, R: int, E: int = 4, 
top_k: int = 2, d_ff: int = 4, dropout: float = 0.1):
        super().__init__()
        self.E, self.top_k = E, top_k
        self.gate = nn.Linear(R, E, bias=False)
        self.experts = nn.ModuleList([ExpertFFN(d_model, d_model * d_ff, dropout) for _ in range(E)])
    def forward(self, x, mu):
        logits = torch.clamp(self.gate(mu), -30.0, 30.0)           # [B,T,E]
        
g = -torch.log(-torch.log(torch.rand_like(logits).clamp_(1e-6, 1-1e-6)))
        y = torch.nan_to_num(F.softmax((logits + g) / 0.5, dim=-1), nan=0.0)
        topk_vals, topk_idx = torch.topk(y, k=min(self.top_k, self.E), dim=-1)
        mask = torch.zeros_like(y).scatter_(-1, topk_idx, 1.0)
        y = (y * mask) / (y.sum(dim=-1, keepdim=True) + 1e-6)
        outs = [y[..., e].unsqueeze(-1) * expert(x) for e, expert in enumerate(self.experts)]
        out = torch.stack(outs, dim=-1).sum(-1)
        p 
= y.mean(dim=(0, 1))
        return out, {"lb_reg": ((p - 1.0 / self.E) ** 2).mean().detach(), "expert_usage": p.detach()}

class FuzzyTransformerBlock(nn.Module):
    def __init__(self, d_model, n_heads, dropout, R, d_ff_mult=4,
                 moe_E=4, moe_topk=2, type2=False, tau_att_init=1.0,
                 use_rpa=False, rpa_K=0, tau_align=0.7, sinkhorn_iters=8, rpa_posmix=0.0, rpa_detach=True):
        super().__init__()
        self.mem = GaussianFuzzy(d_model, R, type2=type2)
      
  self.attn = FuzzyMHA(d_model, n_heads, dropout, R, tau_att_init=tau_att_init,
                             use_rpa=use_rpa, rpa_K=rpa_K, tau_align=tau_align,
                             sinkhorn_iters=sinkhorn_iters, rpa_posmix=rpa_posmix, rpa_detach=rpa_detach)
        self.norm1, self.norm2 = nn.LayerNorm(d_model), nn.LayerNorm(d_model)
        self.moe = FuzzyMoE(d_model, R, E=moe_E, top_k=moe_topk, d_ff=d_ff_mult, dropout=dropout)
   
     self.res_gate = nn.Linear(R, 1, bias=False)
        self.dropout = nn.Dropout(dropout)
    def forward(self, x):
        mu, _ = self.mem(x)
        mu = torch.nan_to_num(mu, nan=1.0 / mu.size(-1))
        eta = torch.sigmoid(self.res_gate(mu))             # residual gate
        a_out, a_stats = self.attn(self.norm1(x), mu, None)
        x = x + eta 
* self.dropout(a_out)
        m_out, m_stats = self.moe(self.norm2(x), mu)
        x = x + eta * self.dropout(m_out)
        H_mu = - (mu * (mu.clamp_min(1e-8)).log()).sum(-1).mean()
        stats = {**a_stats, **m_stats, "mu_entropy": H_mu}
        return x, stats
\end{lstlisting}

\subsection{Guardian (full)}\label{app:guardian}
\begin{lstlisting}[style=py,caption={Full Guardian controller (policy + step/update).}]
@dataclass
class GuardianState:
    gate_delta: float
    sat_frac: float
    mu_entropy: float
    val_loss: float
    def to_tensor(self, device):
    
    return torch.tensor([self.gate_delta, self.sat_frac, self.mu_entropy, self.val_loss],
                            dtype=torch.float32, device=device)

class GuardianPolicy(nn.Module):
    def __init__(self, state_dim: int, action_dim: int):
        super().__init__()
        self.body = nn.Sequential(nn.Linear(state_dim, 64), nn.Tanh(), nn.Linear(64, 64), nn.Tanh())
        self.mean = nn.Linear(64, action_dim)
        self.log_std = nn.Parameter(torch.zeros(action_dim))
    def forward(self, s): z = 
self.body(s); return self.mean(z), torch.exp(self.log_std)
    def sample(self, s):
        m, std = self.forward(s);
a = m + std * torch.randn_like(m)
        logp = -0.5 * (((a - m) / (std + 1e-8)) ** 2 + 2 * self.log_std + math.log(2 * math.pi)).sum(dim=-1)
        return a, logp

class Guardian:
    def __init__(self, model, lr: float = 1e-3, enable: bool = True):
        self.model, self.enable = model, enable
        self.policy = GuardianPolicy(state_dim=4, action_dim=3)        # Δtau_att, Δλ_delta, Δλ_sat
       
 self.opt = torch.optim.Adam(self.policy.parameters(), lr=lr)
        self.lambda_delta, self.lambda_sat, self._last_logp, self.beta = 0.0, 0.0, None, 1.0
    def set_beta(self, beta: float): self.beta = float(beta)
    def get_tau(self): return torch.stack([blk.attn.tau_att for blk in self.model.blocks])
    def step(self, state: GuardianState) -> Dict[str, float]:
        if not self.enable:
            return {"lambda_delta": self.lambda_delta, "lambda_sat": self.lambda_sat,
                    "tau_att": float(self.get_tau().mean().item())}
  
      s = state.to_tensor(next(self.policy.parameters()).device).unsqueeze(0)
        a, logp = self.policy.sample(s);
self._last_logp = logp
        dtau, dl_delta, dl_sat = a[0].tolist();
scale = self.beta
        with torch.no_grad():
            for blk in self.model.blocks:
                tau = blk.attn.tau_att.data + 0.03 * scale * torch.tensor(dtau, device=blk.attn.tau_att.device)
                overshoot = torch.clamp(tau - blk.attn.tau_max, min=0.0)
                blk.attn.tau_att.data = (tau - 0.10 * overshoot).clamp(0.3, blk.attn.tau_max)
     
   self.lambda_delta = float(np.clip(self.lambda_delta + 0.01 * scale * dl_delta, 0.0, 1.0))
        self.lambda_sat   = float(np.clip(self.lambda_sat   + 0.01 * scale * dl_sat,   0.0, 0.6))
        return {"lambda_delta": self.lambda_delta, "lambda_sat": self.lambda_sat,
                "tau_att": float(self.get_tau().mean().item())}
    def update(self, reward: float):
        if not self.enable or self._last_logp is None: return
        loss = -(self._last_logp.mean() * torch.tensor(reward, dtype=torch.float32, device=self._last_logp.device))
        self.opt.zero_grad(); loss.backward();
self.opt.step()
\end{lstlisting}

\subsection{Chaos + training heuristics (full)}\label{app:chaos}
\begin{lstlisting}[style=py,caption={Chaos controller and in-loop heuristics.}]
class ChaosController:
    def __init__(self, r: float = 3.9, x0: float = 0.721, amp: float = 0.25, decay: float = 5e-4):
        self.r, self.x, self.amp0, self.decay, self.t, self._last = r, float(x0), float(amp), float(decay), 0, 1.0
    def _amp(self): return self.amp0 * math.exp(-self.decay * self.t)
    def step(self) -> float:
        self.x = self.r * self.x * (1.0 - self.x);
self.t += 1
        a = self._amp();
self._last = (1.0 - a) + a * self.x
        return self._last
    def factor(self) -> float: return self._last
    def temp(self, max_extra: float = 0.3) -> float: return 1.0 + max_extra * self._amp()

def apply_warm_in(model, step, warm_steps):
    scale = min(1.0, step / max(1, warm_steps))
    with torch.no_grad():
        for blk in model.blocks:
            blk.attn.bias_scale.fill_(scale)

def dropout_glide(model, base_p, phase):
    if base_p >= 0.08:
      
  tail = max(0.0, 1.0 - (phase / 0.60))
        p_drop = 0.08 + (base_p - 0.08) * tail
        for m in model.modules():
            if isinstance(m, torch.nn.Dropout): m.p = float(p_drop)
\end{lstlisting}

\subsection{LM micro-loss (full)}\label{app:lmloss}
\begin{lstlisting}[style=py,caption={Label-smoothed optimization + entropy floor with unsmoothed reporting.}]
def lm_loss_from_xy(cfg, model, x, y):
    logits, stats = model(x)                     # [B,T,V]
    V 
= logits.size(-1)
    flat_logits, flat_y = logits.view(-1, V), y.view(-1)
    ce_pure_sum = F.cross_entropy(flat_logits, flat_y, reduction="sum")
    ls = getattr(model, "_dyn_label_smooth", cfg.label_smooth)
    ce_sm_sum = F.cross_entropy(flat_logits, flat_y, label_smoothing=ls, reduction="sum")
    loss = ce_sm_sum
    if model.training and cfg.R >= 2:
        H_mu = stats.get("mu_entropy", None)
        if isinstance(H_mu, torch.Tensor):
            H_max = math.log(max(2, cfg.R))
            ent_pen = 
F.relu(cfg.ent_floor_eta * H_max - H_mu)
            lam_sat = getattr(model, "_lambda_sat", 0.0)
            loss = loss + (cfg.ent_floor_alpha * 0.5) * (1.0 + lam_sat) * ent_pen
    return loss, {"ce_pure_sum": ce_pure_sum.detach(),
                  "ce_sm_sum": ce_sm_sum.detach(),
                  "ntok": torch.tensor(y.numel(), device=ce_sm_sum.device)}, stats
\end{lstlisting}

\subsection{WT2 loaders + TS dataset (full)}\label{app:data}
\begin{lstlisting}[style=py,caption={WT2 loaders (HF/GPT-2 BPE) 
and sliding-window TS dataset.}]
def build_wikitext2_loaders(context_len: int, tokens_per_batch: int,
                            num_workers: int = DEFAULT_WORKERS, tokenizer_name: str = "gpt2"):
    datasets = _require("datasets");
transformers = _require("transformers")
    ds = datasets.load_dataset("wikitext", "wikitext-2-raw-v1")
    tok = transformers.AutoTokenizer.from_pretrained(tokenizer_name, use_fast=True)
    if tok.eos_token_id is None: tok.add_special_tokens({"eos_token": ""})
    eos_id = tok.eos_token_id
    def encode_split(split: str):
        texts = [t for t in ds[split]["text"] if t and not t.isspace()]
        ids = []
        for t in texts:
            ids.extend(tok.encode(t, add_special_tokens=False));
ids.append(eos_id)
        return torch.tensor(ids, dtype=torch.long)
    train_ids, val_ids, test_ids = map(encode_split, ("train","validation","test"))
    batch_size = max(1, tokens_per_batch // context_len)
    train_ds = RandomChunkDataset(train_ids, context_len)
    val_ds   = SequentialChunkDataset(val_ids, context_len)
    test_ds  = SequentialChunkDataset(test_ids, context_len)
    train_loader = DataLoader(train_ds, batch_size=batch_size, shuffle=True,  drop_last=True,
                              num_workers=num_workers, pin_memory=PIN_MEMORY)
    val_loader   
= DataLoader(val_ds,   batch_size=batch_size, shuffle=False, drop_last=True,
                              num_workers=num_workers, pin_memory=PIN_MEMORY)
    test_loader  = DataLoader(test_ds,  batch_size=batch_size, shuffle=False, drop_last=True,
                              num_workers=num_workers, pin_memory=PIN_MEMORY)
    return train_loader, val_loader, test_loader, tok.vocab_size, tok

class SlidingWindowTS(Dataset):
    def __init__(self, df: pd.DataFrame, input_cols: List[str], target_cols: 
Optional[List[str]],
                 context_len: int, horizon: int, stride: int = 1, normalize: bool = True):
        self.X = df[input_cols].astype(np.float32).values
        self.Y = self.X if target_cols is None else df[target_cols].astype(np.float32).values
        self.context_len, self.horizon, self.stride = context_len, horizon, stride
        mu = self.X.mean(0, keepdims=True) if normalize else 0.0
        sigma = self.X.std(0, keepdims=True) + 1e-6 if normalize else 
1.0
        self.Xn = (self.X - mu) / sigma
        if target_cols is None:
            self.Yn = (self.Y - mu) / sigma
        else:
            ymu, ysig = self.Y.mean(0, keepdims=True), self.Y.std(0, keepdims=True) + 1e-6
            self.Yn = (self.Y - ymu) / ysig
        self.N = len(self.Xn)
 
   def __len__(self): return max(0, (self.N - (self.context_len + self.horizon)) // self.stride)
    def __getitem__(self, idx: int):
        i = idx * self.stride
        x = self.Xn[i:i + self.context_len]
        y = self.Yn[i + self.context_len:i + self.context_len + self.horizon]
        return torch.from_numpy(x), torch.from_numpy(y)
\end{lstlisting}

\end{document}